\documentclass{article}

\usepackage{microtype}
\usepackage{graphicx}
\usepackage{booktabs}
\usepackage{hyperref}

\usepackage{configuration/conference,times}

\usepackage{amsmath,amssymb,mathtools,amsthm}
\usepackage{booktabs,multirow,array,tabularx,longtable}
\usepackage{float}
\usepackage{needspace}
\usepackage{wrapfig}
\usepackage{xcolor}
\usepackage{colortbl}
\definecolor{bestcolor}{RGB}{213,232,212}
\definecolor{secondcolor}{RGB}{255,242,204}
\definecolor{thirdcolor}{RGB}{230,224,248}
\newcommand{\cbest}[1]{\cellcolor{bestcolor}#1}
\newcommand{\csecond}[1]{\cellcolor{secondcolor}#1}
\newcommand{\cthird}[1]{\cellcolor{thirdcolor}#1}
\usepackage{pifont}
\usepackage{xspace}
\usepackage{tikz}
\usetikzlibrary{arrows.meta,positioning,fit}
\usepackage{algorithm}
\usepackage{algorithmic}
\usepackage{url}
\usepackage[capitalize,noabbrev]{cleveref}
\hypersetup{
    colorlinks=true,
    linkcolor=blue,
    citecolor=blue,
    urlcolor=blue
}

\usepackage{amsmath,amsfonts,bm}

\def\1{\bm{1}}

\DeclareMathAlphabet{\mathsfit}{\encodingdefault}{\sfdefault}{m}{sl}
\SetMathAlphabet{\mathsfit}{bold}{\encodingdefault}{\sfdefault}{bx}{n}

\newcommand{\R}{\mathbb{R}}

\usepackage{xurl}

\newcommand{\method}{ER-Chord\xspace}
\newcommand{\base}{ChordEdit\xspace}
\newcommand{\cmark}{\ding{51}}
\newcommand{\xmark}{\ding{55}}

\newcommand{\csrc}{c_{\mathrm{src}}}
\newcommand{\ctar}{c_{\mathrm{tar}}}
\newcommand{\uc}{u_{\mathrm C}}
\newcommand{\uer}{u_{\mathrm{ER}}}
\newcommand{\Ec}{E_{\mathrm C}}
\newcommand{\Mismatch}{\mathcal{M}}
\newcommand{\leadheading}[1]{\par\medskip\noindent\textbf{#1}\enspace\ignorespaces}

\newtheorem{theorem}{Theorem}[section]

\newtheorem{proposition}[theorem]{Proposition}
\newtheorem{corollary}[theorem]{Corollary}
\theoremstyle{definition}

\theoremstyle{remark}

\title{ER-Chord: Spatially Consistent One-Step Image Editing via Equal-Energy Reliability Redistribution}

\author{%
\parbox[t]{\linewidth}{\raggedright
\mbox{\textbf{Peize Li}$^{1,2,*}$}\quad\mbox{\textbf{Hanzhen Zhao}$^{1,*}$}\quad\mbox{\textbf{Xingguo Xu}$^{3,*}$}\quad\mbox{Yilong Fan$^{4}$}\quad\mbox{Zhen Tao$^{5}$}\quad\mbox{Bo Yin$^{1}$}\quad\mbox{Zhifei Liu$^{1}$}\quad\mbox{Qingdong He$^{6}$}\quad\mbox{Xiaobin Hu$^{1,\dagger}$}\\[8pt]
$^{1}$National University of Singapore\quad $^{2}$Kuaishou Technology\quad $^{3}$Dalian University of Technology\\[2pt]
$^{4}$Tianjin University\quad $^{5}$Harbin Institute of Technology\quad $^{6}$Tencent\\[6pt]
$^{*}$Equal contribution\quad $^{\dagger}$Corresponding author}
}

\iclrfinalcopy

\newcommand\blfootnote[1]{%
  \begingroup
  \renewcommand\thefootnote{}\footnote{#1}%
  \addtocounter{footnote}{-1}%
  \endgroup
}

\begin{document}

\maketitle
\lhead{}
\blfootnote{Preprint.}

\begin{abstract}
One-step generative models enable fast text-guided image editing, yet they often introduce unintended changes to regions that should remain unchanged. Temporal aggregation stabilizes editing fields across noise levels, but leaves their spatial allocation insensitive to local variations in reliability. Consequently, temporal stability alone does not ensure spatially consistent editing. To address this limitation, we propose \method, a training-free and inversion-free framework for reliability-guided spatial redistribution. Our key insight is that agreement between editing fields at neighboring noise levels provides a local reliability signal for allocating editing strength. We formulate this allocation as an equal-energy optimization that redistributes control energy according to local reliability while preserving the original editing directions and total control energy. This formulation admits a unique global solution that can be efficiently computed using existing field evaluations, requiring no additional denoiser calls. Experiments on two benchmarks, PIE-Bench and TEdBench, demonstrate improvements over state-of-the-art one-step editing baselines, achieving superior source preservation alongside consistent gains across multiple generative backbones.The project page is at \url{https://github.com/Peizeli1/ER-Chord}.
\end{abstract}

\section{Introduction}

Distilled text-to-image models, such as SD-Turbo, InstaFlow, and SwiftBrush-v2~\citep{sauer2024adversarial,liu2024instaflow,dao2024swiftbrush}, compress diffusion sampling into a single neural function evaluation. Although this acceleration enables real-time image synthesis, editing real images remains difficult because it requires introducing requested semantic changes while preserving the identity, structure, and unedited regions of the source image. Existing accelerated editors address this trade-off through one-step inversion~\citep{nguyen2025swiftedit}, few-step reconstruction~\citep{deutch2024turboedit,gong2025instantedit}, or inversion-free transport~\citep{kulikov2025flowedit,lu2026chordedit}. Among these approaches, training-free and inversion-free transport is especially attractive because it eliminates both auxiliary model training and iterative latent inversion. However, executing an edit through a single transport update imposes strict requirements on the spatial organization of the resulting control field.

ChordEdit~\citep{lu2026chordedit} stabilizes one-step editing by aggregating source-to-target proxy fields from nearby noise levels. This formulation primarily addresses temporal variations in the estimated transport field. Because this aggregation rule is applied uniformly across the latent grid, the magnitude at each spatial location is merely a passive result of combining the endpoint fields. However, the two endpoint estimates do not agree equally across all locations. Some regions exhibit consistent local control vectors across noise levels, whereas others show substantial disagreement. Uniform temporal aggregation ignores this spatial disparity and distributes editing strength without considering local reliability. Consequently, it often introduces large updates into unedited backgrounds, causing visual artifacts. Therefore, temporal stability does not ensure spatial consistency, leaving spatial magnitude allocation as a crucial yet unaddressed degree of freedom.

This observation motivates a distinction between \emph{where the estimated control is reliable} and \emph{where a semantic edit should occur}. Existing localized editing methods typically use cross-attention maps or internal feature representations to identify edit regions~\citep{couairon2023diffedit,simsar2025lime,soni2025locatedit,nguyen2025swiftedit,hu2026whereedit}. In contrast to these localization approaches, we retain the temporally aggregated editing directions and use endpoint agreement solely to determine their relative spatial strengths. Thus, endpoint consistency serves as an intrinsic reliability signal for the control field rather than a spatial localization map.

Based on this principle, we propose \method, a training-free and inversion-free framework for reliability-guided spatial redistribution. \method preserves the local directions of the Chord field and maintains the total control energy, modifying only how this fixed budget is distributed across spatial locations. We measure local reliability through the agreement between the two Chord endpoint fields and formulate the reallocation as a reliability-weighted quadratic optimization problem subject to an exact equal-energy constraint. This constraint prevents trivial source preservation caused by simply reducing the global editing strength. Although this formulation couples all spatial locations through a shared energy budget, it yields a unique global solution that rescales each local Chord vector by a positive scalar. The entire solution depends on a single scalar variable that can be found efficiently through one-dimensional bisection. \method therefore requires no additional denoiser evaluations, auxiliary training, or backpropagation.

\begin{figure*}[t]
\includegraphics[width=\linewidth]{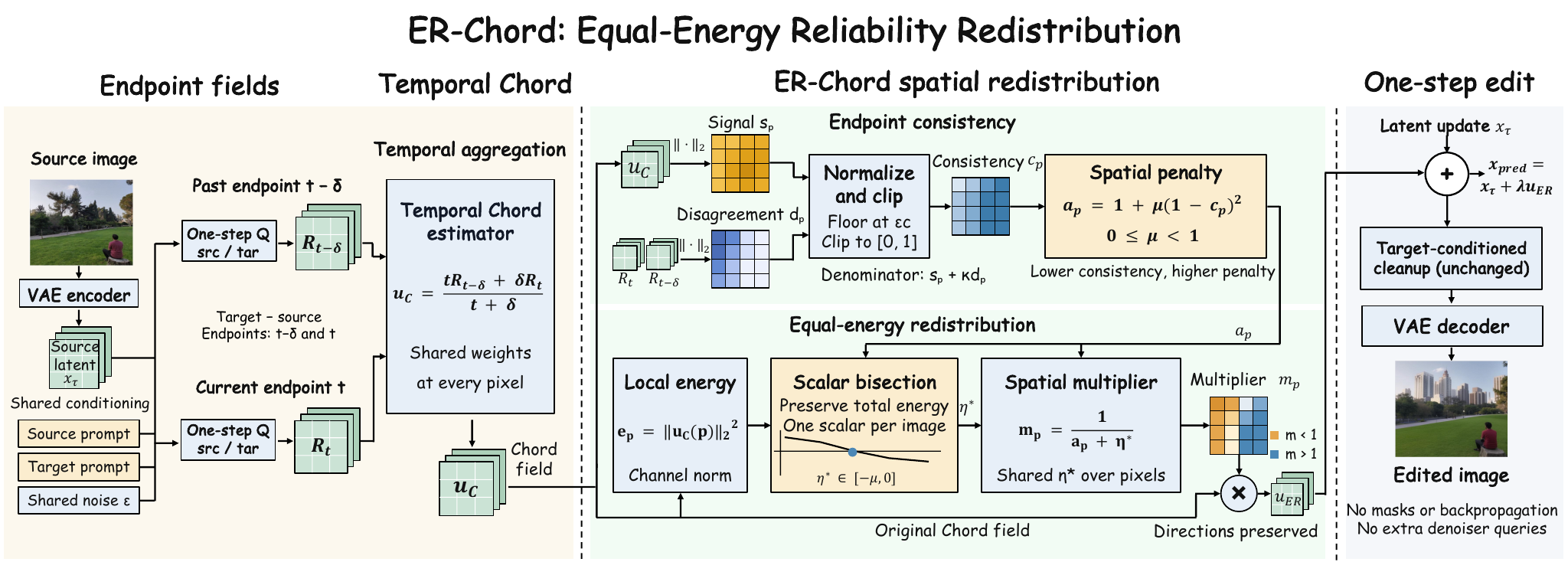}
\caption{\textbf{Method overview.} Starting from a temporally aggregated control field, \method measures local endpoint consistency and redistributes a fixed control energy budget while preserving local directions. The framework requires no auxiliary guidance, training, inversion, additional denoiser evaluations, or backpropagation.}
\label{fig:method_overview}
\end{figure*}

The equal-energy formulation also provides a direct interpretation of the resulting redistribution. Locations with higher endpoint consistency receive larger scaling factors relative to their original Chord vectors, while a shared scalar coordinates these adjustments to preserve the total control energy exactly. Hence, \method modifies only the relative spatial allocation of energy without altering local control directions or the overall update scale (Figure~\ref{fig:method_overview}). As shown in Figures~\ref{fig:temporal-vs-spatial} and~\ref{fig:spatial-energy-redistribution}, under identical total control energy, ChordEdit disperses editing strength into irrelevant backgrounds, causing color bleeding and artifacts. In contrast, \method concentrates energy into semantically consistent regions, producing clean target edits while faithfully preserving source structures.

Empirically, \method consistently outperforms one-step editing baselines across diverse benchmarks using the same denoiser budget. Without tuning hyperparameters for specific datasets or backbones, our configuration substantially improves background preservation on PIE-Bench, generalizes effectively to other benchmarks such as TEdBench in perceptual quality and source fidelity, and transfers across multiple generative backbones (InstaFlow, SwiftBrush-v2, and SD-Turbo). These results establish reliability-guided spatial redistribution as an effective, orthogonal complement to temporal stabilization.

Our main contributions are summarized as follows:
\begin{itemize}
    \item We identify spatial magnitude allocation as an unaddressed degree of freedom in temporally aggregated one-step editing, demonstrating that temporal stability alone does not guarantee spatial consistency.

    \item We propose \method, a framework that uses endpoint consistency as an intrinsic reliability signal to redistribute total control energy under an exact equal-energy constraint, which is solved globally through a one-dimensional scalar root.

    \item We demonstrate on PIE-Bench, TEdBench, and three generative backbones that \method consistently improves source preservation while maintaining semantic edit fidelity, requiring no external guidance, training, or additional denoiser calls.
\end{itemize}

\section{Related Work}

\begin{figure}[t]
\centering
\includegraphics[width=0.98\linewidth]{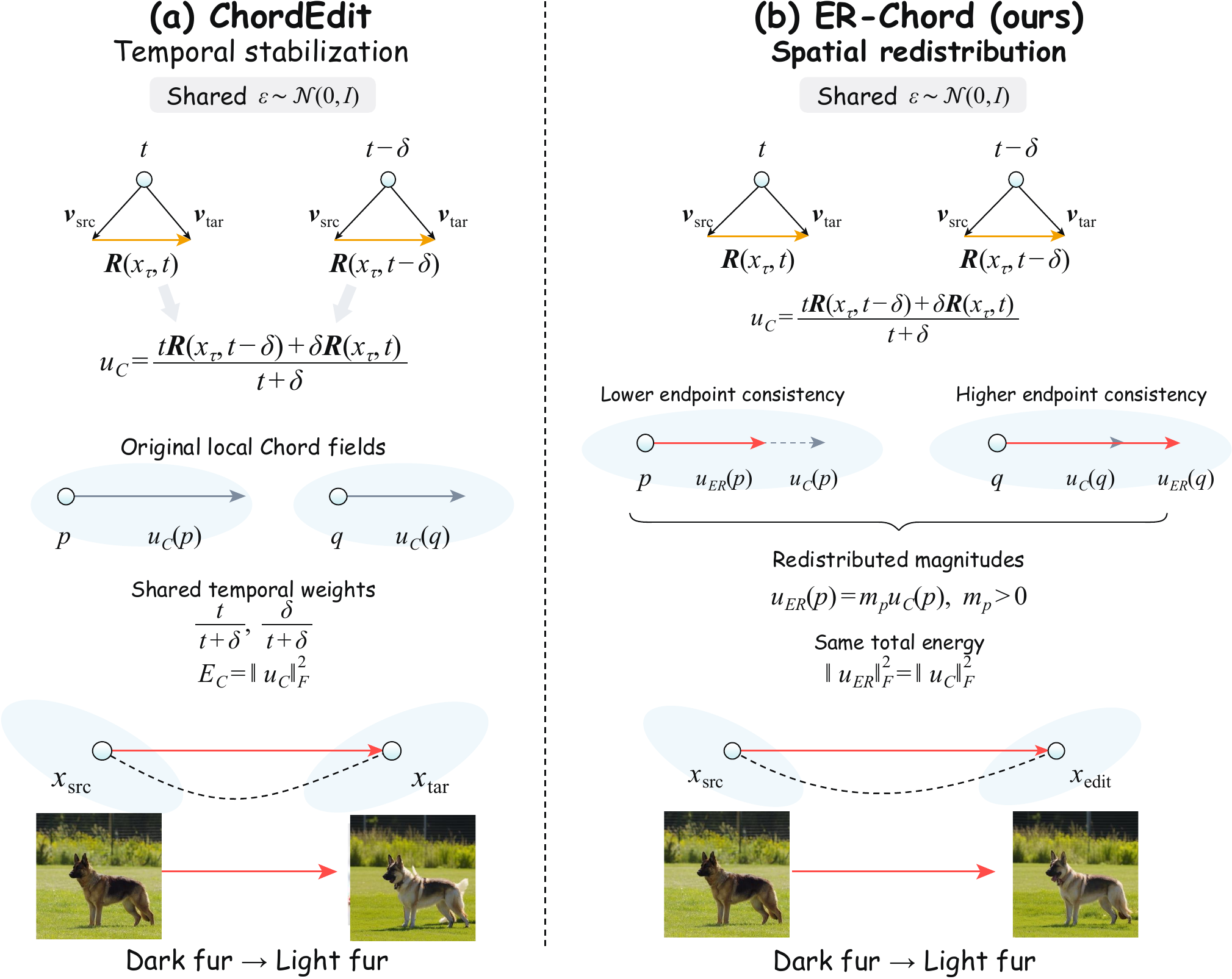}
\caption{\textbf{Temporal vs.\ spatial control in one-step editing.} (a)~\base temporally aggregates endpoint fields with uniform weights across space, which fails to reflect local differences in prediction consistency and disperses edit energy into irrelevant backgrounds. (b)~\method measures local endpoint consistency as a reliability signal and redistributes control energy ($u_{\mathrm{ER}}(p) = m_p u_{\mathrm{C}}(p)$) under an exact conserved energy budget ($\|u_{\mathrm{ER}}\|_F^2 = \|u_{\mathrm{C}}\|_F^2$), concentrating updates onto semantically intended target regions without background artifacts.}
\label{fig:temporal-vs-spatial}
\end{figure}

\leadheading{Real-image editing and inversion.}
Text-guided editing must introduce the requested semantics without losing the identity and layout of the source image. SDEdit starts reverse diffusion from a partially noised source, while DiffEdit estimates an edit mask from differences between conditional predictions~\citep{meng2022sdedit,couairon2023diffedit}. Prompt-to-Prompt controls cross-attention, and MasaCtrl and PnP reuse self-attention or intermediate features from a source branch to preserve structure~\citep{hertz2023prompt,cao2023masactrl,tumanyan2023plug}. Inversion-based methods instead seek a latent trajectory that accurately reconstructs the source. Null-text Inversion optimizes unconditional text embeddings, EDICT introduces coupled transformations for exact inversion, and subsequent methods reduce the mismatch accumulated along inversion and denoising trajectories~\citep{mokady2023null,wallace2023edict,ju2024pnp,huberman2024edit,yang2026directedit}. A related line of work separates inversion and invariance within flow transformers to enable versatile editing~\citep{xu2025unveil}. These approaches improve fidelity by modifying the inversion path, internal features, or spatial masks. \method instead operates after a one-step source-to-target control field has been formed and changes only its local magnitudes, without reconstructing an inversion trajectory or modifying network attention and features.

\needspace{6\baselineskip}
\leadheading{Few-step and one-step editing.}
Consistency-based and flow-based models have made high-quality synthesis possible with one or a few neural function evaluations~\citep{song2024improved,luo2023latent,lipman2023flow,liu2023flow}. This development has motivated editors designed specifically for low-step samplers. TurboEdit and InstantEdit pair specialized inversion with few-step reconstruction, whereas SwiftEdit learns a one-step inversion model and uses mask-guided attention rescaling for localized editing~\citep{deutch2024turboedit,gong2025instantedit,nguyen2025swiftedit}. BiFM jointly learns forward and inverse flows, FlowEdit constructs an inversion-free transport from pretrained flow models, and EEdit accelerates inversion-based editors by exploiting redundant computation across space and time~\citep{dai2026bifm,kulikov2025flowedit,yan2025eedit}. Related acceleration efforts span frequency-energy constrained routing for diffusion adaptation~\citep{yin2025fera} and future-guided frame selection for long-horizon video generation~\citep{yin2026framemorrow}. ChordEdit is the closest starting point for our work. It is training-free and inversion-free, and stabilizes a one-step update by aggregating source-to-target fields across nearby noise levels~\citep{lu2026chordedit}. A subsequent analysis interprets its chord window largely as an effective timestep shift and separates its procedure into coarse transport and fine alignment~\citep{li2026rethinking}. \method leaves this temporal estimator and cleanup unchanged, introduces no additional model evaluations, and only redistributes the resulting field magnitude over space.

\begin{wrapfigure}{r}{0.52\textwidth}
\centering
\includegraphics[width=\linewidth]{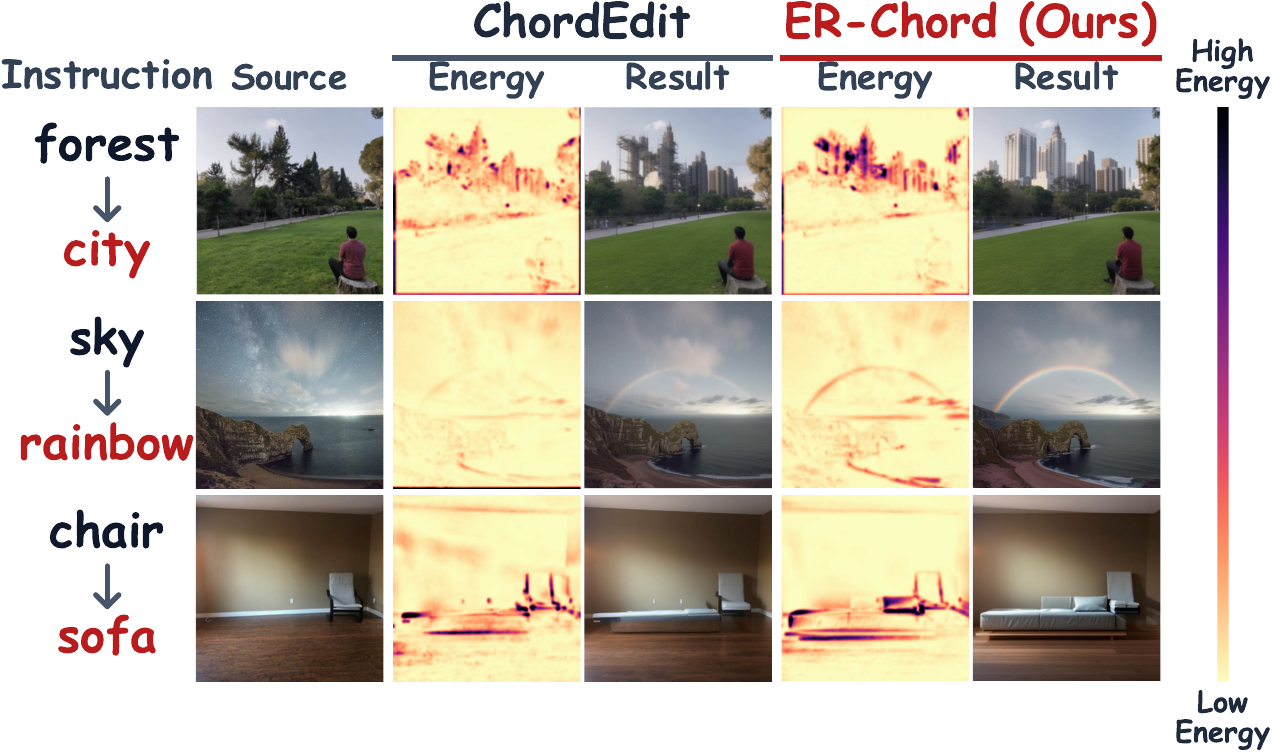}
\caption{\textbf{Spatial energy redistribution under conserved total energy.} Under identical conserved control energy ($1.00\times$), \base disperses energy into backgrounds causing artifacts, while \method concentrates energy on target regions for clean edits.}
\label{fig:spatial-energy-redistribution}
\end{wrapfigure}

\leadheading{Spatial localization and magnitude allocation.}
Spati-ally precise editors generally identify where an edit belongs and then restrict generation in that region. DiffEdit derives a mask from conditional prediction differences, while Learnable Regions and ZONE obtain regions through learned boxes or segmentation~\citep{couairon2023diffedit,lin2024text,li2024zone}. FoI, LIME, and LOCATEdit extract or refine attention-based regions and use them to constrain the denoising process~\citep{guo2024focus,simsar2025lime,soni2025locatedit}. ALE-Edit further limits attribute leakage through object-restricted text embeddings, region-guided attention masking, and background blending~\citep{mun2025addressing}. Localization has also been incorporated into one-step editing. SwiftEdit uses mask-guided attention rescaling, while WhereEdit discovers editable regions from internal features and applies local modulation~\citep{nguyen2025swiftedit,hu2026whereedit}. These methods use semantic or structural evidence to decide where a modification should occur. \method neither predicts an editable region nor changes the semantic direction of the edit. It uses endpoint consistency only to rescale the magnitudes of fixed local Chord directions under an unchanged per-image energy budget. Localization methods determine where an edit should act, whereas \method adjusts the strength of the existing Chord field across locations.

\section{Method}
\label{sec:method}
We develop \method in three steps. We first revisit the temporally aggregated
Chord field and identify spatial magnitude allocation as a degree of freedom
that is not explicitly controlled by the temporal estimator. We then use local
endpoint consistency as a reliability signal and formulate spatial reweighting
as an equal-energy redistribution problem, which preserves both the original
control directions and the total control strength. Finally, we show that this
seemingly coupled optimization admits a simple direction-preserving solution
determined by a single shared scalar, enabling an efficient implementation with
negligible additional computation.

\subsection{From Temporal Aggregation to Spatial Allocation}

Let $I_{\mathrm{src}}$ denote the source image, and let $c_{\mathrm{src}}$ and
$c_{\mathrm{tar}}$ denote the source and target conditions, respectively.
Given a latent state $\mathbf{x}_{\tau}$, we follow ChordEdit~\cite{lu2026chordedit}
and denote by $\mathbf{R}(\mathbf{x}_{\tau}, t)$ the source-to-target proxy field
constructed at noise time $t$. In particular, for the two neighboring endpoints
$t-\delta$ and $t$, we write
$\mathbf{R}_{t-\delta}:=\mathbf{R}(\mathbf{x}_{\tau}, t-\delta)$ and
$\mathbf{R}_{t}:=\mathbf{R}(\mathbf{x}_{\tau}, t)$.

ChordEdit temporally aggregates these two endpoint fields to obtain a stabilized
control field. At each spatial location $p\in\Omega$, its control vector can be
written as
\begin{equation}
\mathbf{u}_{\mathrm{C}}(p)
=
\frac{t}{t+\delta}\mathbf{R}_{t-\delta}(p)
+
\frac{\delta}{t+\delta}\mathbf{R}_{t}(p).
\label{eq:chord_field}
\end{equation}
The coefficients in \eqref{eq:chord_field} are global temporal scalars and do
not depend on the spatial location $p$. The corresponding derivation from the
Chord surrogate is provided in Appendix~\ref{app:chord_derivation}.

Equation~\eqref{eq:chord_field} provides a temporally stabilized editing vector
at every spatial location. However, the resulting spatial magnitude
$\|\mathbf{u}_{\mathrm{C}}(p)\|_2$ is only an implicit consequence of the
temporal aggregation. In particular, the aggregation rule does not explicitly
compare the magnitude assigned to location $p$ with the agreement between
$\mathbf{R}_{t-\delta}(p)$ and $\mathbf{R}_{t}(p)$. Therefore, two locations
with substantially different endpoint consistency are still governed by the
same temporal mixing rule.

This observation suggests separating temporal stabilization from spatial
magnitude allocation. We retain the local directions supplied by
$\mathbf{u}_{\mathrm{C}}$ and treat their spatial magnitudes as the remaining
degree of freedom. Our goal is to redistribute these magnitudes according to
local endpoint consistency while preserving the total control strength of the
original Chord field. Section~\ref{sec:redistribution} develops this
reliability-guided equal-energy redistribution.

\subsection{Reliability-Guided Equal-Energy Redistribution}
\label{sec:redistribution}

We next determine how the spatial magnitudes of the Chord field should be
reallocated. The key signal is the agreement between the two endpoint fields.
At a spatial location $p\in\Omega$, we use
$s_p:=\|\mathbf{u}_{\mathrm{C}}(p)\|_2$ to denote the magnitude of the
temporally aggregated Chord control and
$d_p:=\|\mathbf{R}_{t}(p)-\mathbf{R}_{t-\delta}(p)\|_2$ to measure the
disagreement between the two endpoint estimates. Since the same absolute
endpoint discrepancy can have different significance at locations with
different control magnitudes, we normalize the disagreement relative to the
local Chord signal.

Specifically, we define the local endpoint-consistency score as
\begin{equation}
c_p
=
\operatorname{clip}_{[0,1]}
\left(
\frac{s_p}
{\max\{s_p+\kappa d_p,\varepsilon_c\}}
\right),
\qquad
\kappa\geq 0,\ \varepsilon_c>0.
\label{eq:consistency_score}
\end{equation}
A larger $c_p$ indicates stronger agreement between the two endpoint fields
relative to the local control magnitude. In particular, $c_p$ approaches one
when the endpoint disagreement is small compared with $s_p$, and decreases as
the relative disagreement grows. This score measures local temporal
consistency of the estimated control field rather than semantic correctness.

We use this reliability signal only to redistribute spatial magnitude. The
local Chord directions remain unchanged as the reference editing directions,
and the total control strength is kept fixed. The latter constraint is
important because otherwise improved source preservation could be obtained
trivially by globally weakening the edit. Let
$E_{\mathrm{C}}:=\|\mathbf{u}_{\mathrm{C}}\|_F^2$ denote the total
pre-cleanup control energy of the Chord field. We seek a redistributed field
that remains close to $\mathbf{u}_{\mathrm{C}}$, while assigning a larger cost
to control magnitude placed at locations with lower endpoint consistency.

These design requirements lead to the following equal-energy redistribution:
\begin{equation}
\begin{aligned}
\mathbf{u}_{\mathrm{ER}}
\in
\arg\min_{\mathbf{u}}\quad
&
\frac{1}{2}
\sum_{p\in\Omega}
\|\mathbf{u}(p)-\mathbf{u}_{\mathrm{C}}(p)\|_2^2
+
\frac{\mu}{2}
\sum_{p\in\Omega}
(1-c_p)^2
\|\mathbf{u}(p)\|_2^2
\\
\text{subject to}\quad
&
\|\mathbf{u}\|_F^2
=
E_{\mathrm{C}},
\qquad
0\leq \mu<1.
\end{aligned}
\label{eq:er_optimization}
\end{equation}
The first term preserves fidelity to the original Chord field, while the
second term discourages control energy from remaining at locations with low
endpoint consistency. The equality constraint keeps the total pre-cleanup
energy exactly unchanged. Consequently, any reduction of control magnitude at
one subset of spatial locations must be compensated by an increase elsewhere.

Although \eqref{eq:er_optimization} couples all spatial locations through a
global energy constraint, its structure is considerably simpler than a generic
constrained field optimization. In the next subsection, we show that its
unique global solution preserves every local Chord direction and reduces the
entire redistribution to a one-dimensional scalar solve.

\subsection{Scalar-Root Solution and Properties}
\label{sec:closed_form}

The equal-energy constraint in \eqref{eq:er_optimization} couples all spatial
locations through a single global budget. Nevertheless, the optimization
admits a particularly simple structure. For each $p\in\Omega$, define
$a_p:=1+\mu(1-c_p)^2$, which represents the local quadratic penalty induced by
the endpoint-consistency score. The spatial dependence is entirely contained
in $a_p$, while the global energy constraint is enforced by one shared scalar.

\begin{theorem}[Unique direction-preserving solution]
\label{thm:er_solution}
Assume $0\leq \mu<1$ and $E_{\mathrm{C}}>0$. Then the optimization problem
\eqref{eq:er_optimization} admits a unique global minimizer. In particular,
there exists a unique scalar $\eta^\star\in[-\mu,0]$ such that
\begin{equation}
\mathbf{u}_{\mathrm{ER}}(p)
=
\frac{\mathbf{u}_{\mathrm{C}}(p)}
{a_p+\eta^\star},
\qquad p\in\Omega,
\label{eq:er_closed_form}
\end{equation}
where $\eta^\star$ is the unique root of
\begin{equation}
\sum_{p\in\Omega}
\frac{\|\mathbf{u}_{\mathrm{C}}(p)\|_2^2}
{(a_p+\eta)^2}
=
E_{\mathrm{C}}.
\label{eq:dual_root}
\end{equation}
\end{theorem}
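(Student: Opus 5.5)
The plan is to rewrite the objective of \eqref{eq:er_optimization} as a separable quadratic, find its stationary points with a single Lagrange multiplier for the energy constraint, and then certify global optimality and uniqueness with a convexity (S-lemma / trust-region type) argument. This avoids having to analyse a nonconvex feasible set directly.

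\textbf{Rewriting and stationarity.} Expanding the squares, the objective equals
\[
f(\mathbf{u})=\tfrac12\sum_{p\in\Omega}a_p\|\mathbf{u}(p)\|_2^2-\sum_{p\in\Omega}\langle\mathbf{u}(p),\mathbf{u}_{\mathrm{C}}(p)\rangle+\tfrac12E_{\mathrm{C}},
\]
with $a_p=1+\mu(1-c_p)^2$. Since $c_p\in[0,1]$, we have $a_p\in[1,1+\mu]$. Introduce
\[
L(\mathbf{u},\eta)=f(\mathbf{u})+\tfrac{\eta}{2}\bigl(\|\mathbf{u}\|_F^2-E_{\mathrm{C}}\bigr).
\]
Setting $\nabla_{\mathbf{u}(p)}L=0$ gives $(a_p+\eta)\mathbf{u}(p)=\mathbf{u}_{\mathrm{C}}(p)$. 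Whenever $a_p+\eta>0$ for all $p$, this yields exactly the form \eqref{eq:er_closed_form}, and the constraint becomes \eqref{eq:dual_root}.

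\textbf{Existence and uniqueness of the root.} Let $\phi(\eta):=\sum_p s_p^2/(a_p+\eta)^2$ on $\eta>-\min_p a_p$. For $\eta\ge-\mu$ we have $a_p+\eta\ge 1-\mu>0$, so $\phi$ is well defined on $[-\mu,0]$. Because $E_{\mathrm{C}}=\sum_p s_p^2>0$, at least one $s_p>0$, and therefore $\phi$ is continuous and strictly decreasing there. At the endpoints:
\begin{itemize}
\item at $\eta=0$, $a_p\ge1$ gives $\phi(0)\le\sum_p s_p^2=E_{\mathrm{C}}$;
\item at $\eta=-\mu$, $a_p-\mu\le1$ gives $\phi(-\mu)\ge E_{\mathrm{C}}$.
\end{itemize}
The intermediate value theorem and strict monotonicity then give a unique root $\eta^\star\in[-\mu,0]$. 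Strict monotonicity on the whole half-line $(-\min_p a_p,\infty)$ also rules out roots outside this interval within the region where $a_p+\eta>0$.

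\textbf{Global optimality and uniqueness of the minimizer.} This is the step needing care, since the constraint set is a sphere and hence nonconvex, so stationarity alone is not enough. Fix $\eta^\star$. Because $a_p+\eta^\star\ge 1-\mu>0$ for every $p$, the map $\mathbf{u}\mapsto L(\mathbf{u},\eta^\star)$ is a strictly convex quadratic. Its unique unconstrained minimizer is $\mathbf{u}^\star(p)=\mathbf{u}_{\mathrm{C}}(p)/(a_p+\eta^\star)$, and $\mathbf{u}^\star$ is feasible by the choice of $\eta^\star$. For any feasible $\mathbf{u}$ the constraint term vanishes, so
\[
f(\mathbf{u})=L(\mathbf{u},\eta^\star)\ge L(\mathbf{u}^\star,\eta^\star)=f(\mathbf{u}^\star),
\]
with equality only if $\mathbf{u}=\mathbf{u}^\star$ by strict convexity. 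This proves that $\mathbf{u}^\star$ is the unique global minimizer, hence equals $\mathbf{u}_{\mathrm{ER}}$. At locations with $s_p=0$ the formula gives $\mathbf{u}_{\mathrm{ER}}(p)=0$, consistent with direction preservation. The main subtlety is exactly this certification step: the multiplier must make every local coefficient $a_p+\eta^\star$ positive, and this is where the assumption $\mu<1$ enters.
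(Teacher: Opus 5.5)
Your proposal is correct and follows essentially the same route as the paper: Lagrangian stationarity, the strictly decreasing root function with the same endpoint bounds on $[-\mu,0]$, and a global-optimality certificate resting on $a_p+\eta^\star\geq 1-\mu>0$. The only difference is presentational: the paper uses the sphere identity to compute the objective gap $J(\mathbf{z})-J(\mathbf{u}_{\mathrm{ER}})=\tfrac12\sum_{p}(a_p+\eta^\star)\|\mathbf{z}(p)-\mathbf{u}_{\mathrm{ER}}(p)\|_2^2$ directly for every feasible $\mathbf{z}$, which is exactly your inequality $L(\mathbf{u},\eta^\star)\geq L(\mathbf{u}^\star,\eta^\star)$ written out quantitatively.
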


Theorem~\ref{thm:er_solution} reduces the spatially coupled optimization to a
one-dimensional scalar solve. The local reliability scores determine the
spatially varying denominators $a_p$, while the single shared variable
$\eta^\star$ adjusts all locations jointly so that the original control energy
is exactly preserved. Since $a_p+\eta^\star>0$, the solution in
\eqref{eq:er_closed_form} rescales each local Chord vector without changing its
direction. The proof of existence, uniqueness, and global optimality is
provided in Appendix~\ref{app:er_solution_proof}.

To make the redistribution explicit, let
$m_p:=(a_p+\eta^\star)^{-1}$ denote the local spatial multiplier. Then
$\mathbf{u}_{\mathrm{ER}}(p)=m_p\mathbf{u}_{\mathrm{C}}(p)$. The following
properties summarize the behavior of the resulting field.

\begin{corollary}[Energy and direction preservation]
\label{cor:er_properties}
Under the assumptions of Theorem~\ref{thm:er_solution}, the \method solution
satisfies
\begin{equation}
\|\mathbf{u}_{\mathrm{ER}}\|_F^2
=
\|\mathbf{u}_{\mathrm{C}}\|_F^2.
\label{eq:energy_preservation}
\end{equation}
Moreover, $m_p>0$ for every $p\in\Omega$, and
\[
\frac{1}{1+\mu}
\leq
m_p
\leq
\frac{1}{1-\mu}.
\]
If $c_p>c_q$ and $\mu>0$, then $m_p>m_q$.
\end{corollary}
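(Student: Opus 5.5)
The corollary follows directly from Theorem~\ref{thm:er_solution}; no new optimization argument is needed.

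\textbf{Energy and direction preservation.} By the theorem, $\mathbf{u}_{\mathrm{ER}}(p)=m_p\mathbf{u}_{\mathrm{C}}(p)$ with $m_p=(a_p+\eta^\star)^{-1}$. Summing $\|\mathbf{u}_{\mathrm{ER}}(p)\|_2^2=m_p^2\|\mathbf{u}_{\mathrm{C}}(p)\|_2^2$ over $\Omega$ gives exactly the left-hand side of \eqref{eq:dual_root} evaluated at $\eta^\star$. Since $\eta^\star$ is a root of that equation, the sum equals $E_{\mathrm{C}}$, which is \eqref{eq:energy_preservation}.

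\textbf{Bounds on $m_p$.} Since $c_p\in[0,1]$ by the clipping in \eqref{eq:consistency_score}, we have $(1-c_p)^2\in[0,1]$, so $a_p\in[1,1+\mu]$. Combining this with $\eta^\star\in[-\mu,0]$ from the theorem gives
\[
1-\mu\le a_p+\eta^\star\le 1+\mu.
\]
Because $\mu<1$, the lower end is strictly positive. Hence $m_p>0$, and inverting the bounds yields $\frac{1}{1+\mu}\le m_p\le\frac{1}{1-\mu}$.

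\textbf{Monotonicity.} If $c_p>c_q$ with both in $[0,1]$, then $1-c_p<1-c_q$, and both are nonnegative, so $(1-c_p)^2<(1-c_q)^2$. With $\mu>0$ this gives $a_p<a_q$, hence $a_p+\eta^\star<a_q+\eta^\star$. Both denominators are positive, so $m_p>m_q$.

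\textbf{Main obstacle.} The only delicate point is that the bounds and positivity depend on the theorem's localization $\eta^\star\in[-\mu,0]$. That localization is taken as given from Theorem~\ref{thm:er_solution}, and all remaining steps are elementary algebra.
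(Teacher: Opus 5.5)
Your proposal is correct and follows essentially the same route as the paper: positivity and the bounds $\frac{1}{1+\mu}\le m_p\le\frac{1}{1-\mu}$ come from $1\le a_p\le 1+\mu$ together with $\eta^\star\in[-\mu,0]$, and monotonicity comes from the shared $\eta^\star$ and positive denominators. The only cosmetic difference is that you obtain energy preservation from the root equation \eqref{eq:dual_root}, while the paper cites feasibility of the constructed minimizer; these are the same fact.
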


The corollary clarifies the role of the redistribution. \method neither rotates
the local Chord directions nor changes the total pre-cleanup control strength.
Instead, it modifies only the relative spatial magnitudes. Locations with
higher endpoint consistency receive a larger multiplicative adjustment than
locations with lower endpoint consistency. This ordering applies to the
multipliers themselves rather than to the absolute final energy shares, which
also depend on the original Chord magnitudes.

The scalar $\eta^\star$ can be obtained efficiently by bisection over
$[-\mu,0]$. For a latent field of size $C\times H\times W$ and $K$ bisection
iterations, the additional computation is $\mathcal{O}(K \cdot C \cdot H \cdot W)$ and consists
only of channel reductions and element-wise arithmetic. \method reuses the
endpoint fields already required by ChordEdit and therefore introduces no
additional denoiser evaluations.

To further characterize the spatial redistribution, we compare the fraction of
the total control energy assigned to each location before and after \method.
For $p\in\Omega$, define the original and redistributed spatial energy shares
as $\pi^{\mathrm{C}}_p:=\|\mathbf{u}_{\mathrm{C}}(p)\|_2^2/E_{\mathrm{C}}$
and
$\pi^{\mathrm{ER}}_p:=\|\mathbf{u}_{\mathrm{ER}}(p)\|_2^2/E_{\mathrm{C}}$,
respectively. Since
$\mathbf{u}_{\mathrm{ER}}(p)=m_p\mathbf{u}_{\mathrm{C}}(p)$, these shares
satisfy
\begin{equation}
\pi^{\mathrm{ER}}_p
=
m_p^2\pi^{\mathrm{C}}_p.
\label{eq:spatial_energy_share}
\end{equation}
Equation~\eqref{eq:spatial_energy_share} makes the redistribution explicit:
\method modifies the original spatial energy allocation only through the local
multipliers, while the total energy remains unchanged.

The local consistency scores and the shared scalar $\eta^\star$ play
complementary roles: the scores determine the relative spatial penalties,
while $\eta^\star$ enforces the common energy budget.

Importantly, the equal-energy constraint does not require every low-consistency
location to be attenuated or every high-consistency location to be amplified
relative to one. Several multipliers may lie on the same side of one, provided
that their energy-weighted adjustments collectively preserve
$E_{\mathrm{C}}$. Therefore, \method should be interpreted as a globally
coupled spatial reallocation rather than an independent pointwise suppression
rule.

The same optimization also yields an additional certificate showing that the
redistributed field places no more reliability-weighted energy on inconsistent
locations than the original Chord field. We provide the formal statement and
proof in Appendix~\ref{app:reliability_certificate}.

\begin{table*}[t]
\caption{\textbf{Editing results on PIE-Bench}~\citep{ju2024pnp}. MSE and LPIPS are scaled by $10^3$. Best, second-best, and third-best results are highlighted with \colorbox{bestcolor}{green}, \colorbox{secondcolor}{yellow}, and \colorbox{thirdcolor}{purple} backgrounds, respectively. T-/I-free: training-/inversion-free. Extended metrics appear in Appendix~\ref{app:experiments}.}
\label{tab:main}
\vspace{1.0ex}
\centering
\small
\resizebox{\linewidth}{!}{%
\setlength{\tabcolsep}{3.5pt}
\begin{tabular}{ll rrr rr cc rrrr}
\toprule
& & \multicolumn{3}{c}{Background Preservation} & \multicolumn{2}{c}{Target Alignment} & \multicolumn{2}{c}{Properties} & \multicolumn{4}{c}{Efficiency} \\
\cmidrule(lr){3-5}\cmidrule(lr){6-7}\cmidrule(lr){8-9}\cmidrule(lr){10-13}
Type & Method & PSNR$\uparrow$ & MSE$\downarrow$ & LPIPS$\downarrow$ & Whole$\uparrow$ & Edit$\uparrow$ & T-free & I-free & Steps$\downarrow$ & NFE$\downarrow$ & Time(s)$\downarrow$ & VRAM(M)$\downarrow$\\
\midrule
\multirow{5}{*}{Multi-step}
& DDIM + MasaCtrl (2023) & 21.25 & 8.58 & 106.59 & 24.13 & 21.13 & \cmark & \xmark & 50 & 100 & 55.20 & 12272\\
& Direct Inv. + MasaCtrl (2023) & 21.78 & 7.99 & 87.38 & 24.42 & 21.38 & \cmark & \xmark & 50 & 100 & 79.10 & 12272\\
& DDIM + PnP (2023) & 21.26 & 8.42 & 113.58 & 25.45 & 22.54 & \cmark & \xmark & 50 & 100 & 28.01 & \cthird{9262}\\
& Direct Inv. + PnP (2023) & 21.43 & 8.10 & 106.26 & 25.48 & 22.63 & \cmark & \xmark & 50 & 100 & 28.03 & \cthird{9262}\\
& FlowEdit (SD3) (2025) & 22.17 & 7.69 & 104.81 & \cbest{26.64} & \cbest{23.69} & \cmark & \cmark & \cthird{33} & 33 & 7.22 & 17140\\
\midrule
\multirow{3}{*}{Few-step}
& TurboEdit (SDXL-T) (2024) & 21.44 & 9.49 & 108.60 & 24.66 & 21.79 & \cmark & \cmark & \csecond{4} & \cthird{4} & 2.69 & 13826\\
& InfEdit (SD1.4) (2024) & \csecond{24.14} & 6.82 & \cbest{55.69} & 24.89 & 21.88 & \cmark & \cmark & \csecond{4} & \cthird{4} & 1.41 & \cbest{6502}\\
& InstantEdit (PeRFlow) (2025) & 23.80 & \cbest{4.21} & \csecond{60.92} & 24.97 & 21.82 & \cmark & \xmark & \csecond{4} & 8 & 1.30 & 16270\\
\midrule
\multirow{7}{*}{One-step}
& SwiftEdit (SB-v2) (2025) & 21.71 & 8.22 & 91.22 & 24.93 & 21.85 & \xmark & \xmark & \cbest{1} & \csecond{2} & 0.54 & 15060\\
& ChordEdit (SB-v2) (2026) & 22.04 & 7.13 & 111.22 & 25.12 & 22.58 & \cmark & \cmark & \cbest{1} & \csecond{2} & \cthird{0.38} & \csecond{6988}\\
& ChordEdit (w/o clean, SD-T) & \cthird{23.89} & \cthird{5.05} & 88.36 & 24.97 & 21.87 & \cmark & \cmark & \cbest{1} & \cbest{1} & \cbest{0.20} & \csecond{6988}\\
& ChordEdit (SD-Turbo) (2026) & 22.20 & 6.84 & 128.25 & \cthird{25.58} & \cthird{22.96} & \cmark & \cmark & \cbest{1} & \csecond{2} & \cthird{0.38} & \csecond{6988}\\
\cmidrule(lr){2-13}
& \textbf{\method (SB-v2)} & 22.48 & 6.62 & 105.10 & 25.26 & 22.74 & \cmark & \cmark & \cbest{1} & \csecond{2} & \csecond{0.37} & \csecond{6988}\\
& \textbf{\method (w/o clean, SD-T)} & \cbest{24.31} & \csecond{4.70} & \cthird{82.82} & 25.04 & 21.99 & \cmark & \cmark & \cbest{1} & \cbest{1} & \cbest{0.20} & \csecond{6988}\\
& \textbf{\method (SD-Turbo)} & 22.77 & 6.35 & 120.23 & \csecond{25.74} & \csecond{23.16} & \cmark & \cmark & \cbest{1} & \csecond{2} & \csecond{0.37} & \csecond{6988}\\
\bottomrule
\end{tabular}%
}
\vspace{0.5ex}
\end{table*}

\section{Experiments}

\subsection{Evaluation protocol}

\leadheading{Benchmarks.}
PIE-Bench~\citep{ju2024pnp} contains 700 masked editing examples in ten categories and provides the primary comparison with \base.
We additionally evaluate the original 100-example TEdBench~\citep{kawar2023imagic} as an external benchmark for text-guided real-image editing.
For ChordEdit and \method, we use the same target-only protocol with identical inputs, target captions, random seeds, and evaluation settings.

\leadheading{Metrics.}
On PIE-Bench, background preservation uses non-edited-region PSNR, MSE, SSIM, LPIPS~\citep{zhang2018unreasonable}, and structure distance; semantic alignment uses CLIP-Whole and CLIP-Edited~\citep{radford2021learning}.
On TEdBench, we report LPIPS for perceptual source preservation, CLIP-I for image-level source similarity, CLIP-T for target text--image similarity, and Dir-Sim for directional similarity.
Every ER run also records the dual interval, normalized dual residual, equal-energy residual, KKT residual, multiplier bounds, direction residual, and inequality~\eqref{eq:app_reliability_certificate}.

\leadheading{Implementation and statistics.}
We retain the ChordEdit configuration: one shared noise sample, $t=0.90$, $\delta=0.15$, $\lambda=1.0$, and cleanup time $t_c=0.30$.
For \method, we use $\mu=0.25$, $\kappa=1.0$, $\varepsilon_c=10^{-8}$, an energy threshold of $10^{-12}$, a dual tolerance of $10^{-6}$, and at most 32 bisection iterations.
These values are not tuned on the external benchmarks.
The comparisons between ChordEdit and \method are paired by source example.
Runtime and peak memory are measured with the same hardware, precision, batch size, and warm-up.

\subsection{Comparison on PIE-Bench}

\leadheading{Full PIE-Bench evaluation.}
Table~\ref{tab:main} reproduces the comparison from \base~Table~1 and adds \method under the same SD-Turbo configuration with $(\mu,\kappa)=(0.25,1.0)$, one shared noise sample, and all 700 PIE-Bench examples (seed 42).
The comparison covers all 700 PIE-Bench examples across the ten predefined editing categories.
As shown in Table~\ref{tab:main}, the primary benefit of \method lies in substantial gains in source content preservation.
Specifically, \method improves unedited-region PSNR from $22.20$ to $22.77$\,dB ($+0.57$\,dB), lowers MSE from $6.84$ to $6.35$, and reduces perceptual distortion LPIPS from $128.25$ to $120.23$ ($10^3\times$ scaling).
Detailed structural preservation metrics (e.g., SSIM and Structure Distance) exhibit concordant improvements in the appendix (Table~\ref{tab:pie-detail}).
Crucially, these preservation gains are accompanied by improved target semantic alignment: CLIP-Whole rises from $25.58$ to $25.74$, and CLIP-Edited from $22.96$ to $23.16$.
The two methods have the same total pre-cleanup control energy by construction. Thus, the gains cannot be attributed to uniformly reducing the control field; they are consistent with a more effective spatial allocation.

\begin{figure*}[t]
\centering
\includegraphics[width=\linewidth]{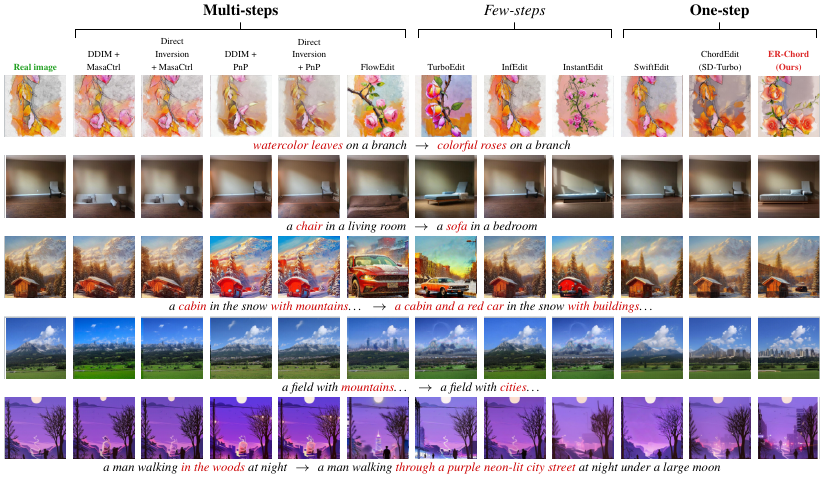}
\caption{\textbf{Qualitative comparison on PIE-Bench.} Real images are in the first column. Multi-step, few-step, and one-step methods are arranged horizontally. Prompts are noted under each row. ChordEdit and \method use the identical SD-Turbo backbone.}
\label{fig:pie-qualitative}
\end{figure*}

\leadheading{Qualitative comparison.}
In the examples shown in Figure~\ref{fig:pie-qualitative}, several multi-step and few-step editors alter the background or blur fine details. ChordEdit also exhibits visible background changes in some cases. With reliability-guided redistribution, \method produces cleaner target edits in these examples, including the sofa and mountain-to-building transformations, while retaining more of the source background.

\subsection{Ablations, Cross-Backbone Transfer, and Efficiency}

\leadheading{Transport vs.\ cleanup ablation.}
To understand how spatial redistribution interacts with the multi-stage pipeline, Table~\ref{tab:transport-cleanup} evaluates \method with and without the post-hoc cleanup stage.
In the no-cleanup setting ($\text{NFE}=1$), the evaluation directly isolates the spatial transport field.
Here, \method improves PSNR from $23.89$ to $24.31$\,dB and CLIP-Edited from $21.87$ to $21.99$ over the raw Chord field, confirming that reliability reallocation strengthens transport dynamics before cleanup.
With cleanup ($\text{NFE}=2$), \method maintains its advantage ($22.77$ vs.\ $22.20$\,dB PSNR; $23.16$ vs.\ $22.96$ CLIP-Edited), orthogonally complementing post-hoc temporal cleanup.

\leadheading{Core mechanism ablations.}
To isolate the necessity of equal-energy optimization, we test two baselines. First, a heuristic ``Reweight-and-Normalize (RN)'' directly scales the Chord field by $c_p$ and normalizes total energy, achieving only 22.40\,dB PSNR (vs.\ \method 22.77\,dB). Second, ``Spatial Permutation'' applies the exact optimal multipliers from \method but randomly permutes their spatial assignment. The PSNR drops to 21.05\,dB, supporting the importance of matching the multipliers to their original spatial locations.

\begin{table*}[t]
\centering
\begin{minipage}[t]{0.46\textwidth}
\caption{\textbf{Transport \& cleanup ablation.} PIE-Bench. NFE=1: transport; NFE=2: full. Bold: best.}
\label{tab:transport-cleanup}
\end{minipage}%
\hfill
\begin{minipage}[t]{0.51\textwidth}
\caption{\textbf{TEdBench comparison}~\citep{kawar2023imagic}. 100 real-image prompts. Bold: best.}
\label{tab:tedbench}
\end{minipage}

\begin{minipage}[t]{0.46\textwidth}
\centering
\scalebox{0.73}{%
\small
\renewcommand{\arraystretch}{0.93}
\setlength{\tabcolsep}{4pt}
\begin{tabular}{llrrr}
\toprule
\multicolumn{2}{c}{Configuration} & \multicolumn{2}{c}{PIE-Bench metrics} & \\
\cmidrule(lr){1-2}\cmidrule(lr){3-4}
Field & Cleanup & PSNR$\uparrow$ & CLIP-Edited$\uparrow$ & NFE\\
\midrule
Naive ($\delta=0$) & no & 21.89 & 20.83 & 1\\
Chord ($\delta=.15$) & no & 23.89 & 21.87 & 1\\
\rowcolor{bestcolor!40} \textbf{\method} & no & \textbf{24.31} & \textbf{21.99} & \textbf{1}\\
\midrule
Naive ($\delta=0$) & yes & 21.38 & 21.96 & 2\\
Chord ($\delta=.15$) & yes & 22.20 & 22.96 & 2\\
\rowcolor{bestcolor!40} \textbf{\method} & yes & \textbf{22.77} & \textbf{23.16} & 2\\
\bottomrule
\end{tabular}%
}
\end{minipage}%
\hfill
\begin{minipage}[t]{0.51\textwidth}
\centering
\scalebox{0.73}{%
\small
\renewcommand{\arraystretch}{0.93}
\setlength{\tabcolsep}{2.5pt}
\begin{tabular}{llcccc}
\toprule
Method & Paradigm & LPIPS$\downarrow$ & CLIP-I$\uparrow$ & CLIP-T$\uparrow$ & Dir-Sim$\uparrow$ \\
\midrule
SDEdit & Multi-step & 0.30 & 0.85 & 0.22 & 0.15 \\
Pix2Pix-Zero & Multi-step & 0.29 & 0.84 & 0.23 & 0.14 \\
Imagic & Optimization & 0.52 & 0.86 & 0.25 & 0.20 \\
LEDITS++ & Inversion & 0.23 & 0.87 & 0.26 & 0.21 \\
FlowEdit & Flow & 0.22 & 0.89 & 0.26 & 0.22 \\
\midrule
ChordEdit & One-step & 0.21 & 0.91 & 0.28 & \textbf{0.25} \\
\rowcolor{bestcolor!40} \textbf{\method (Ours)} & One-step & \textbf{0.20} & \textbf{0.92} & \textbf{0.29} & \textbf{0.25} \\
\bottomrule
\end{tabular}%
}
\end{minipage}
\end{table*}

\leadheading{Transfer to TEdBench.}
To evaluate out-of-domain generalization, we benchmark on 100 diverse real-image editing cases in TEdBench~\citep{kawar2023imagic} featuring complex attribute modifications and diverse natural scenes.
As shown in Table~\ref{tab:tedbench}, \method improves upon ChordEdit in both perceptual source preservation (LPIPS decreases from $0.21$ to $0.20$) and image fidelity (CLIP-I increases from $0.91$ to $0.92$).
On the reported TEdBench metrics, \method also improves over the listed multi-step and optimization-based baselines while using one transport update. These results suggest transfer to the tested scenes without benchmark-specific tuning.

\begin{wraptable}[8]{r}{0.48\textwidth}
\vspace{-2.0em}
\caption{\textbf{Cross-backbone generalization.} Results across backbones. Bold: best per backbone.}
\label{tab:backbones}
\vspace{0.1ex}
\centering
\resizebox{\linewidth}{!}{%
\renewcommand{\arraystretch}{0.94}%
\setlength{\tabcolsep}{4.5pt}%
\begin{tabular}{llcc}
\toprule
Backbone & Method & PSNR$\uparrow$ & CLIP-Edited$\uparrow$\\
\midrule
\multirow{3}{*}{InstaFlow}
  & Naive & 22.05 & 20.19 \\
  & ChordEdit & 23.05 & 21.39 \\
\rowcolor{bestcolor!40}
  & \textbf{\method (Ours)} & \textbf{23.40} & \textbf{21.54} \\
\midrule
\multirow{3}{*}{SwiftBrush-v2}
  & Naive & 20.52 & 21.06 \\
  & ChordEdit & 22.04 & 22.58 \\
\rowcolor{bestcolor!40}
  & \textbf{\method (Ours)} & \textbf{22.48} & \textbf{22.74} \\
\midrule
\multirow{3}{*}{SD-Turbo}
  & Naive & 21.38 & 21.96 \\
  & ChordEdit & 22.20 & 22.96 \\
\rowcolor{bestcolor!40}
  & \textbf{\method (Ours)} & \textbf{22.77} & \textbf{23.16} \\
\bottomrule
\end{tabular}%
}
\end{wraptable}
\leadheading{Cross-backbone generalization.}
To assess transfer across backbones, Table~\ref{tab:backbones} evaluates \method with InstaFlow, SwiftBrush-v2, and SD-Turbo.
With fixed hyperparameters ($(\mu,\kappa)=(0.25,1.0)$), \method improves PSNR over ChordEdit by $+0.35$\,dB on InstaFlow, $+0.44$\,dB on SwiftBrush-v2, and $+0.57$\,dB on SD-Turbo. CLIP-Edited also improves on all three tested backbones.
\vspace{0.8em}
\leadheading{Computational overhead and convergence.}
Because \method reuses the ChordEdit endpoints ($\mathbf{R}_t, \mathbf{R}_{t-\delta}$), it adds no network evaluations relative to ChordEdit. The full pipeline uses $\text{NFE}=2$ with cleanup and $\text{NFE}=1$ without it, and the measured peak memory is unchanged at 6988\,MiB.
The scalar bisection terminates within 32 iterations and takes $<0.01$\,s CPU time. The measured full-pipeline runtimes are similar: $0.37$\,s/img for \method and $0.38$\,s/img for ChordEdit.

\section{Conclusion}
This paper investigates spatial magnitude allocation in training-free, one-step image editing. Existing transport methods stabilize updates across noise levels using temporal weights shared across spatial locations, leaving local reliability unaccounted for in the allocation of editing strength. We propose \method, which uses endpoint agreement to redistribute control energy under an exact pre-cleanup energy constraint. The resulting field preserves the original local directions and has a unique global solution determined by a scalar root, with no additional denoiser evaluations relative to ChordEdit. Experiments on PIE-Bench, TEdBench, and three generative backbones (SD-Turbo, SwiftBrush-v2, and InstaFlow) show improved source preservation alongside target alignment in the tested settings. These results suggest that spatial allocation is a useful complement to temporal stabilization. The equal-energy constraint prevents a trivial improvement from globally weakening the edit: energy reduced in one location must be reallocated to another. The endpoint-consistency signal determines that reallocation, while a shared scalar root keeps the computation small. Because the method rescales the Chord field without changing its local directions, it cannot correct an erroneous direction supplied by the underlying field. Testing how this dependence changes across additional editing tasks and backbones is a useful direction for future work.

\subsubsection*{AI Use Statement}
In this work, we used generative AI tools to improve the clarity, grammar, and presentation of the manuscript. All AI-assisted content was carefully reviewed and verified by the authors. All scientific ideas, methodological designs, experimental results, and conclusions were developed and validated by the authors. We take full responsibility for the final content of this work.

\vspace{1.0ex}
{\small
\setlength{\bibsep}{3.0pt}

\bibliography{sources/reference}
\bibliographystyle{configuration/conference}

\newpage
\appendix
\onecolumn

\section{Mathematical Derivations and Proofs}
\label{app:math}
\label{sec:appendix_math_analysis}

This appendix provides the mathematical details supporting the construction in
Section~\ref{sec:method}. We first derive the temporally aggregated Chord field
used as the starting point of our method. We then establish the existence,
uniqueness, and global optimality of the equal-energy redistribution solution,
followed by the properties implied by its closed-form structure. These results
complete the technical arguments omitted from the main text while preserving
the method-level presentation in Section~\ref{sec:method}.

\subsection{Derivation of the Temporally Aggregated Chord Field}
\label{app:chord_derivation}

For completeness, we derive the temporally aggregated Chord field used in
Section~\ref{sec:method}. Following ChordEdit~\cite{lu2026chordedit}, let
$\widehat{\mathbf{u}}_{t-\delta}$ denote the endpoint control estimate entering
the temporal surrogate at time $t-\delta$. The Chord estimator considers the
quadratic objective
\begin{equation}
\Phi_t(\mathbf{u};\mathbf{x}_{\tau})
=
t\|\mathbf{u}-\widehat{\mathbf{u}}_{t-\delta}\|_F^2
+
\int_{t-\delta}^{t}
\|\mathbf{u}-\mathbf{R}_{\xi}\|_F^2
\,\mathrm{d}\xi,
\label{eq:app_chord_surrogate}
\end{equation}
where $\mathbf{R}_{\xi}:=\mathbf{R}(\mathbf{x}_{\tau},\xi)$ denotes the
source-to-target proxy field at noise time $\xi$.

The objective in \eqref{eq:app_chord_surrogate} is strictly convex with
respect to $\mathbf{u}$ because its Hessian is
$2(t+\delta)\mathbf{I}\succ 0$. Therefore, its unique minimizer is
characterized by the first-order condition
\begin{equation}
t\left(
\mathbf{u}
-
\widehat{\mathbf{u}}_{t-\delta}
\right)
+
\int_{t-\delta}^{t}
\left(
\mathbf{u}
-
\mathbf{R}_{\xi}
\right)
\,\mathrm{d}\xi
=
\mathbf{0}.
\label{eq:app_chord_stationarity}
\end{equation}
Since the integration interval has length $\delta$, rearranging
\eqref{eq:app_chord_stationarity} gives the unique minimizer
\begin{equation}
\mathbf{u}_{t}^{\star}
=
\frac{t}{t+\delta}
\widehat{\mathbf{u}}_{t-\delta}
+
\frac{1}{t+\delta}
\int_{t-\delta}^{t}
\mathbf{R}_{\xi}
\,\mathrm{d}\xi.
\label{eq:app_chord_exact_minimizer}
\end{equation}

ChordEdit then approximates the previous endpoint estimate by
$\widehat{\mathbf{u}}_{t-\delta}\approx\mathbf{R}_{t-\delta}$ and evaluates
the temporal integral using the endpoint approximation
$\int_{t-\delta}^{t}\mathbf{R}_{\xi}\,\mathrm{d}\xi
\approx\delta\mathbf{R}_{t}$. Substituting these approximations into
\eqref{eq:app_chord_exact_minimizer} yields
\begin{equation}
\mathbf{u}_{\mathrm{C}}
=
\frac{t}{t+\delta}\mathbf{R}_{t-\delta}
+
\frac{\delta}{t+\delta}\mathbf{R}_{t}.
\label{eq:app_chord_field}
\end{equation}
Restricting \eqref{eq:app_chord_field} to any spatial location
$p\in\Omega$ recovers the Chord field used in
\eqref{eq:chord_field}. In particular, the coefficients
$t/(t+\delta)$ and $\delta/(t+\delta)$ are determined solely by the temporal
construction and are independent of $p$.

\subsection{Proof of the Unique Global Solution}
\label{app:er_solution_proof}

We now prove Theorem~\ref{thm:er_solution}. Recall from
Section~\ref{sec:closed_form} that
$a_p:=1+\mu(1-c_p)^2$. Since $c_p\in[0,1]$ and $0\leq\mu<1$, we have
$1\leq a_p\leq 1+\mu$ for every $p\in\Omega$.

\begin{proof}[Proof of Theorem~\ref{thm:er_solution}]
Let $J(\mathbf{u})$ denote the objective in
\eqref{eq:er_optimization}. Introducing a scalar Lagrange multiplier $\eta$
for the equal-energy constraint, the Lagrangian is
\begin{equation}
\mathcal{L}(\mathbf{u},\eta)
=
J(\mathbf{u})
+
\frac{\eta}{2}
\left(
\|\mathbf{u}\|_F^2-E_{\mathrm{C}}
\right).
\label{eq:app_er_lagrangian}
\end{equation}
The stationarity condition with respect to the local vector
$\mathbf{u}(p)$ is
\begin{equation}
(a_p+\eta)\mathbf{u}(p)
=
\mathbf{u}_{\mathrm{C}}(p),
\qquad p\in\Omega.
\label{eq:app_er_stationarity}
\end{equation}
Hence, whenever $a_p+\eta>0$ for all $p\in\Omega$, any stationary point must
satisfy
\begin{equation}
\mathbf{u}(p;\eta)
=
\frac{\mathbf{u}_{\mathrm{C}}(p)}
{a_p+\eta}.
\label{eq:app_er_candidate}
\end{equation}

It remains to determine the scalar $\eta$ from the equal-energy constraint.
Substituting \eqref{eq:app_er_candidate} into
$\|\mathbf{u}\|_F^2=E_{\mathrm{C}}$ gives
\begin{equation}
\phi(\eta)
:=
\sum_{p\in\Omega}
\frac{\|\mathbf{u}_{\mathrm{C}}(p)\|_2^2}
{(a_p+\eta)^2}
-
E_{\mathrm{C}}
=
0.
\label{eq:app_er_root}
\end{equation}
For any $\eta\in[-\mu,0]$, the bound $a_p\geq 1$ implies
$a_p+\eta\geq 1-\mu>0$. Therefore, $\phi$ is well defined and continuously
differentiable on $[-\mu,0]$. Moreover,
\begin{equation}
\phi'(\eta)
=
-2
\sum_{p\in\Omega}
\frac{\|\mathbf{u}_{\mathrm{C}}(p)\|_2^2}
{(a_p+\eta)^3}
<
0,
\label{eq:app_er_root_derivative}
\end{equation}
where the strict inequality follows from $E_{\mathrm{C}}>0$. Hence,
$\phi$ is strictly decreasing on this interval.

At the right endpoint, $a_p\geq 1$ gives
\begin{equation}
\phi(0)
=
\sum_{p\in\Omega}
\frac{\|\mathbf{u}_{\mathrm{C}}(p)\|_2^2}
{a_p^2}
-
E_{\mathrm{C}}
\leq
0.
\label{eq:app_er_right_endpoint}
\end{equation}
At the left endpoint, $a_p\leq 1+\mu$ gives
$a_p-\mu\leq 1$, and therefore
\begin{equation}
\phi(-\mu)
=
\sum_{p\in\Omega}
\frac{\|\mathbf{u}_{\mathrm{C}}(p)\|_2^2}
{(a_p-\mu)^2}
-
E_{\mathrm{C}}
\geq
0.
\label{eq:app_er_left_endpoint}
\end{equation}
The intermediate value theorem and the strict monotonicity of $\phi$ therefore
imply the existence of a unique
$\eta^\star\in[-\mu,0]$ satisfying $\phi(\eta^\star)=0$.
Substituting $\eta^\star$ into \eqref{eq:app_er_candidate} yields
\eqref{eq:er_closed_form}, and \eqref{eq:app_er_root} reduces to
\eqref{eq:dual_root}.

We finally establish global optimality over the complete feasible set, which
is the energy sphere $\|\mathbf{u}\|_F^2=E_{\mathrm{C}}$. Let
$\mathbf{A}$ denote the block-diagonal matrix whose block associated with
location $p$ is $a_p\mathbf{I}$. Then
\eqref{eq:app_er_stationarity} at $\eta=\eta^\star$ can be written as
\begin{equation}
(\mathbf{A}+\eta^\star\mathbf{I})
\mathbf{u}_{\mathrm{ER}}
=
\mathbf{u}_{\mathrm{C}}.
\label{eq:app_er_matrix_stationarity}
\end{equation}
Consider any feasible field $\mathbf{z}$ and define
$\mathbf{d}:=\mathbf{z}-\mathbf{u}_{\mathrm{ER}}$.
Since both $\mathbf{z}$ and $\mathbf{u}_{\mathrm{ER}}$ have energy
$E_{\mathrm{C}}$, we have
$2\langle\mathbf{u}_{\mathrm{ER}},\mathbf{d}\rangle_F
+\|\mathbf{d}\|_F^2=0$.
Using this identity together with
\eqref{eq:app_er_matrix_stationarity}, the objective difference satisfies
\begin{equation}
J(\mathbf{z})-J(\mathbf{u}_{\mathrm{ER}})
=
\frac{1}{2}
\left\langle
\mathbf{d},
(\mathbf{A}+\eta^\star\mathbf{I})\mathbf{d}
\right\rangle_F.
\label{eq:app_er_objective_gap}
\end{equation}
Since $a_p\geq 1$ and $\eta^\star\geq-\mu$,
\begin{equation}
\mathbf{A}+\eta^\star\mathbf{I}
\succeq
(1-\mu)\mathbf{I}
\succ
\mathbf{0}.
\label{eq:app_er_positive_definite}
\end{equation}
Consequently,
\begin{equation}
J(\mathbf{z})-J(\mathbf{u}_{\mathrm{ER}})
\geq
\frac{1-\mu}{2}
\|\mathbf{d}\|_F^2.
\label{eq:app_er_global_gap}
\end{equation}
The right-hand side is strictly positive whenever
$\mathbf{z}\neq\mathbf{u}_{\mathrm{ER}}$. Hence,
$\mathbf{u}_{\mathrm{ER}}$ is the unique global minimizer of
\eqref{eq:er_optimization}.
\end{proof}

\subsection{Properties of the Closed-Form Redistribution}
\label{app:er_properties}

We next establish the properties of the closed-form solution stated in
Corollary~\ref{cor:er_properties}. Recall that
$m_p:=(a_p+\eta^\star)^{-1}$, where
$a_p=1+\mu(1-c_p)^2$ and $\eta^\star\in[-\mu,0]$.

\begin{proof}[Proof of Corollary~\ref{cor:er_properties}]
The exact energy preservation
\begin{equation}
\|\mathbf{u}_{\mathrm{ER}}\|_F^2
=
\|\mathbf{u}_{\mathrm{C}}\|_F^2
=
E_{\mathrm{C}}
\label{eq:app_energy_preservation}
\end{equation}
follows directly from the equality constraint in
\eqref{eq:er_optimization} and the feasibility of the solution constructed in
Theorem~\ref{thm:er_solution}.

Since $c_p\in[0,1]$, we have $1\leq a_p\leq 1+\mu$. Together with
$\eta^\star\in[-\mu,0]$, this gives
$a_p+\eta^\star\geq 1-\mu>0$. Hence $m_p>0$ for every $p\in\Omega$.
Therefore,
$\mathbf{u}_{\mathrm{ER}}(p)=m_p\mathbf{u}_{\mathrm{C}}(p)$ preserves the
direction of every nonzero local Chord vector.

The same bounds imply
$1-\mu\leq a_p+\eta^\star\leq 1+\mu$, and consequently
$1/(1+\mu)\leq m_p\leq 1/(1-\mu)$.

Finally, suppose that $\mu>0$ and $c_p>c_q$. Since
$(1-c_p)^2<(1-c_q)^2$, we have $a_p<a_q$. The scalar
$\eta^\star$ is shared across all spatial locations, so
$a_p+\eta^\star<a_q+\eta^\star$. Both denominators are positive, and therefore
$m_p>m_q$.
\end{proof}

The spatial energy relation used in Section~\ref{sec:closed_form} follows
immediately from the same multiplicative representation. Recall that
$\pi_p^{\mathrm{C}}
:=\|\mathbf{u}_{\mathrm{C}}(p)\|_2^2/E_{\mathrm{C}}$
and
$\pi_p^{\mathrm{ER}}
:=\|\mathbf{u}_{\mathrm{ER}}(p)\|_2^2/E_{\mathrm{C}}$.
Then
\begin{equation}
\pi_p^{\mathrm{ER}}
=
\frac{m_p^2\|\mathbf{u}_{\mathrm{C}}(p)\|_2^2}
{E_{\mathrm{C}}}
=
m_p^2\pi_p^{\mathrm{C}}.
\label{eq:app_spatial_energy_share}
\end{equation}
Hence the local multipliers act directly on the original spatial energy shares,
while \eqref{eq:app_energy_preservation} ensures that the shares remain
normalized over the entire field. In particular, the ordering
$c_p>c_q\Rightarrow m_p>m_q$ concerns the relative multiplicative adjustment
and does not imply
$\pi_p^{\mathrm{ER}}>\pi_q^{\mathrm{ER}}$, since the final shares also depend
on the original Chord allocation.

The closed form also recovers several limiting cases. If $\mu=0$, then
$a_p=1$ for every $p$, and the equal-energy condition yields
$\eta^\star=0$, so $\mathbf{u}_{\mathrm{ER}}=\mathbf{u}_{\mathrm{C}}$.
More generally, if $c_p$ is constant over the nonzero support of
$\mathbf{u}_{\mathrm{C}}$, all active locations share the same positive
multiplier. Exact energy preservation then forces this multiplier to equal
one. When $E_{\mathrm{C}}=0$, the constraint admits only the zero field, and
thus $\mathbf{u}_{\mathrm{ER}}=\mathbf{0}$.

\subsection{Reliability-Weighted Energy Certificate}
\label{app:reliability_certificate}

Beyond the pointwise properties of the closed-form solution, the optimization
objective also controls how much energy is assigned to locations with low
endpoint consistency. For any nonzero control field $\mathbf{u}$, define its
reliability-weighted energy fraction as
\begin{equation}
\Mismatch(\mathbf{u};\mathbf{c})
:=
\frac{
\sum_{p\in\Omega}
(1-c_p)^2
\|\mathbf{u}(p)\|_2^2
}{
\|\mathbf{u}\|_F^2
},
\label{eq:app_reliability_mismatch}
\end{equation}
where $\mathbf{c}$ denotes the collection of endpoint-consistency scores
$\{c_p\}_{p\in\Omega}$.

\begin{proposition}[Reliability-weighted energy certificate]
\label{prop:reliability_certificate}
Assume $\mu>0$ and $E_{\mathrm{C}}>0$. Then the solution of
\eqref{eq:er_optimization} satisfies
\begin{equation}
\Mismatch(\mathbf{u}_{\mathrm{ER}};\mathbf{c})
\leq
\Mismatch(\mathbf{u}_{\mathrm{C}};\mathbf{c})
-
\frac{
\|\mathbf{u}_{\mathrm{ER}}-\mathbf{u}_{\mathrm{C}}\|_F^2
}{
\mu E_{\mathrm{C}}
}
\leq
\Mismatch(\mathbf{u}_{\mathrm{C}};\mathbf{c}).
\label{eq:app_reliability_certificate}
\end{equation}
\end{proposition}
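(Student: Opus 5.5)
The plan is to compare objective values, using optimality of $\mathbf{u}_{\mathrm{ER}}$ from Theorem~\ref{thm:er_solution} against the feasible competitor $\mathbf{u}_{\mathrm{C}}$. The starting point is that $\mathbf{u}_{\mathrm{C}}$ itself satisfies the equal-energy constraint, since $\|\mathbf{u}_{\mathrm{C}}\|_F^2=E_{\mathrm{C}}$.

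The first step applies the global gap estimate \eqref{eq:app_er_objective_gap} with $\mathbf{z}=\mathbf{u}_{\mathrm{C}}$ and $\mathbf{d}=\mathbf{u}_{\mathrm{C}}-\mathbf{u}_{\mathrm{ER}}$. This yields
\[
J(\mathbf{u}_{\mathrm{C}})-J(\mathbf{u}_{\mathrm{ER}})\geq \tfrac{1}{2}\langle \mathbf{d},(\mathbf{A}+\eta^\star\mathbf{I})\mathbf{d}\rangle_F .
\]
A simpler route is the crude inequality $J(\mathbf{u}_{\mathrm{ER}})\le J(\mathbf{u}_{\mathrm{C}})$ combined with expanding both sides:
\[
\tfrac12\|\mathbf{u}_{\mathrm{ER}}-\mathbf{u}_{\mathrm{C}}\|_F^2+\tfrac{\mu}{2}\sum_p(1-c_p)^2\|\mathbf{u}_{\mathrm{ER}}(p)\|_2^2
\le 0+\tfrac{\mu}{2}\sum_p(1-c_p)^2\|\mathbf{u}_{\mathrm{C}}(p)\|_2^2 .
\]

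The second step divides this inequality by $\mu E_{\mathrm{C}}/2$, which is legitimate because $\mu>0$ and $E_{\mathrm{C}}>0$. Both fields have Frobenius energy exactly $E_{\mathrm{C}}$: for $\mathbf{u}_{\mathrm{ER}}$ by Corollary~\ref{cor:er_properties}, and for $\mathbf{u}_{\mathrm{C}}$ by definition. Hence each weighted sum divided by $E_{\mathrm{C}}$ is precisely $\Mismatch(\cdot;\mathbf{c})$. The result is
\[
\Mismatch(\mathbf{u}_{\mathrm{ER}};\mathbf{c})+\frac{\|\mathbf{u}_{\mathrm{ER}}-\mathbf{u}_{\mathrm{C}}\|_F^2}{\mu E_{\mathrm{C}}}\le \Mismatch(\mathbf{u}_{\mathrm{C}};\mathbf{c}),
\]
which is the first inequality in \eqref{eq:app_reliability_certificate}. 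The second inequality follows because the subtracted term is nonnegative. Nonzeroness of both fields, needed for $\Mismatch$ to be defined, again follows from $E_{\mathrm{C}}>0$ and energy preservation.

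There is no real obstacle. The only point requiring care is that the normalization of $\Mismatch$ uses the same denominator $E_{\mathrm{C}}$ for both fields, and this is exactly where the equal-energy constraint is used. Without it, the weighted sums could not be converted into comparable fractions. The stronger gap estimate \eqref{eq:app_er_global_gap} could tighten the constant, but it is not needed for the stated form.
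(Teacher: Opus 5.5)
Your proposal is correct and is essentially the paper's own proof: use feasibility of $\mathbf{u}_{\mathrm{C}}$ and global optimality to get $J(\mathbf{u}_{\mathrm{ER}})\le J(\mathbf{u}_{\mathrm{C}})$, expand, divide by $\mu E_{\mathrm{C}}/2$ using that both fields have energy $E_{\mathrm{C}}$, and then drop the nonnegative term. Your side remark is also right but not needed for the stated form: combining the exact gap identity \eqref{eq:app_er_objective_gap} with $\mathbf{A}+\eta^\star\mathbf{I}\succeq(1-\mu)\mathbf{I}$ improves the coefficient of $\|\mathbf{u}_{\mathrm{ER}}-\mathbf{u}_{\mathrm{C}}\|_F^2/(\mu E_{\mathrm{C}})$ from $1$ to $2-\mu$.
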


\begin{proof}
The original Chord field $\mathbf{u}_{\mathrm{C}}$ is feasible for
\eqref{eq:er_optimization} because
$\|\mathbf{u}_{\mathrm{C}}\|_F^2=E_{\mathrm{C}}$. By the global optimality of
$\mathbf{u}_{\mathrm{ER}}$ established in
Theorem~\ref{thm:er_solution},
\begin{equation}
J(\mathbf{u}_{\mathrm{ER}})
\leq
J(\mathbf{u}_{\mathrm{C}})
=
\frac{\mu}{2}
\sum_{p\in\Omega}
(1-c_p)^2
\|\mathbf{u}_{\mathrm{C}}(p)\|_2^2.
\label{eq:app_certificate_optimality}
\end{equation}
Expanding the left-hand side yields
\begin{equation}
\begin{aligned}
\frac{1}{2}
\|\mathbf{u}_{\mathrm{ER}}-\mathbf{u}_{\mathrm{C}}\|_F^2
+
\frac{\mu}{2}
\sum_{p\in\Omega}
(1-c_p)^2
\|\mathbf{u}_{\mathrm{ER}}(p)\|_2^2
\leq
\frac{\mu}{2}
\sum_{p\in\Omega}
(1-c_p)^2
\|\mathbf{u}_{\mathrm{C}}(p)\|_2^2.
\end{aligned}
\label{eq:app_certificate_expansion}
\end{equation}
Both $\mathbf{u}_{\mathrm{ER}}$ and $\mathbf{u}_{\mathrm{C}}$ have total
energy $E_{\mathrm{C}}$. Dividing
\eqref{eq:app_certificate_expansion} by $\mu E_{\mathrm{C}}/2$ and using
\eqref{eq:app_reliability_mismatch} gives
\begin{equation}
\Mismatch(\mathbf{u}_{\mathrm{ER}};\mathbf{c})
+
\frac{
\|\mathbf{u}_{\mathrm{ER}}-\mathbf{u}_{\mathrm{C}}\|_F^2
}{
\mu E_{\mathrm{C}}
}
\leq
\Mismatch(\mathbf{u}_{\mathrm{C}};\mathbf{c}),
\label{eq:app_certificate_intermediate}
\end{equation}
which establishes the first inequality in
\eqref{eq:app_reliability_certificate}. The second follows because the
additional term in \eqref{eq:app_certificate_intermediate} is nonnegative.
\end{proof}

\begin{proposition}[Bounds on the Dual Root]
\label{prop:dual_bounds}
Let $\eta^\star$ be the optimal dual root, and let
$\pi_p=\|\mathbf{u}_{\mathrm{C}}(p)\|_2^2/E_{\mathrm{C}}$.
The dual root satisfies
\begin{equation}
-\mu \Mismatch(\mathbf{u}_{\mathrm{C}};\mathbf{c}) \leq \eta^\star \leq 0.
\label{eq:dual_kkt_jensen}
\end{equation}
\end{proposition}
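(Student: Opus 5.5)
The upper bound $\eta^\star\le 0$ is already part of Theorem~\ref{thm:er_solution}, which places the root in $[-\mu,0]$. So the work is the lower bound, and I would get it from the root equation \eqref{eq:dual_root} by a Jensen-type argument, as the label of the statement suggests. Throughout I assume $E_{\mathrm{C}}>0$ and $0\le\mu<1$, as in Theorem~\ref{thm:er_solution}, so that $\pi_p$ and $\Mismatch(\mathbf{u}_{\mathrm{C}};\mathbf{c})$ are well defined.

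\emph{Step 1: rewrite the root equation as an average.} Dividing \eqref{eq:dual_root} by $E_{\mathrm{C}}$ gives
\[
\sum_{p\in\Omega}\pi_p\,(a_p+\eta^\star)^{-2}=1 .
\]
Here the weights satisfy $\pi_p\ge 0$ and $\sum_p\pi_p=1$, so they form a probability distribution on $\Omega$. Every point $x_p:=a_p+\eta^\star$ lies in $[1-\mu,1+\mu]$, which is contained in $(0,\infty)$, by the bounds used in the proof of Corollary~\ref{cor:er_properties}.

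\emph{Step 2: compute the mean of the $x_p$.} Since $a_p=1+\mu(1-c_p)^2$,
\[
\sum_p\pi_p a_p=1+\mu\sum_p\pi_p(1-c_p)^2=1+\mu\,\Mismatch(\mathbf{u}_{\mathrm{C}};\mathbf{c}).
\]
This follows directly from definition \eqref{eq:app_reliability_mismatch}. Hence $\bar x:=\sum_p\pi_p x_p=1+\mu\Mismatch(\mathbf{u}_{\mathrm{C}};\mathbf{c})+\eta^\star$, and $\bar x>0$ as a convex combination of positive numbers.

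\emph{Step 3: apply Jensen's inequality.} The map $x\mapsto x^{-2}$ is convex on $(0,\infty)$, so
\[
1=\sum_p\pi_p x_p^{-2}\ \ge\ \bar x^{-2}.
\]
Because $\bar x>0$, this is equivalent to $\bar x\ge 1$, that is, $\mu\Mismatch(\mathbf{u}_{\mathrm{C}};\mathbf{c})+\eta^\star\ge 0$. This is exactly the lower bound in \eqref{eq:dual_kkt_jensen}.

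The only delicate point is making sure the Jensen step is legitimate. This requires that all $x_p$ lie in the domain where $x^{-2}$ is convex, and that $\bar x>0$ so the inequality can be inverted; Step 1 secures both from Theorem~\ref{thm:er_solution}. No other obstacle is expected.
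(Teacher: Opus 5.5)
Your proof is correct and matches the paper's argument: the upper bound $\eta^\star\le 0$ comes from the endpoint argument of Theorem~\ref{thm:er_solution}, and the lower bound comes from dividing the root equation by $E_{\mathrm{C}}$ and applying Jensen's inequality to $x\mapsto x^{-2}$ with weights $\pi_p$, whose weighted mean is $1+\mu\Mismatch(\mathbf{u}_{\mathrm{C}};\mathbf{c})+\eta^\star$. Your explicit check that every $x_p$ and $\bar x$ is positive is a slightly more careful version of the paper's one-line remark that the argument is positive.
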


\begin{proof}
For the lower bound, apply Jensen's inequality to the strictly convex
function $f(x)=x^{-2}$ on $x>0$:
\begin{equation}
\begin{aligned}
1
&=\sum_p\pi_p\bigl(1+\mu(1-c_p)^2+\eta^\star\bigr)^{-2}\\
&\geq
\left[\sum_p\pi_p\bigl(1+\mu(1-c_p)^2+\eta^\star\bigr)\right]^{-2}\\
&=\bigl(1+\mu\Mismatch(\mathbf{u}_{\mathrm{C}};\mathbf{c})
+\eta^\star\bigr)^{-2}.
\end{aligned}
\end{equation}
The argument is positive, so the inequality implies
$\eta^\star\geq-\mu\Mismatch(\mathbf{u}_{\mathrm{C}};\mathbf{c})$.
The upper bound $\eta^\star\leq0$ follows from the endpoint argument in
the proof of Theorem~\ref{thm:er_solution}. These bounds can tighten the
bisection interval.
\end{proof}

Proposition~\ref{prop:reliability_certificate} concerns the
reliability-weighted control energy defined in
\eqref{eq:app_reliability_mismatch}. It does not by itself imply an
improvement in decoded image quality or semantic correctness.

\section{Implementation Details, Numerical Diagnostics, and Reproducibility}
\label{app:implementation}

\subsection{Algorithmic Flow and Finite-Precision Bisection}
For each input image, \method reuses the two Chord endpoint proxy fields
($\mathbf{R}_t,\mathbf{R}_{t-\delta}$) to construct the base field $\uc$.
It computes the channel-reduced signal and disagreement maps, forms the
consistency scores $c_p$ and weights $a_p$, solves for the scalar root
$\eta^\star$, and applies the spatial multipliers before the standard Chord
update and post-hoc cleanup. For active samples
($E_{\mathrm{C}}>\varepsilon_E$), the normalized dual and energy residuals are
$r_\phi=|\phi(\eta^\star)|/E_{\mathrm{C}}$ and
$r_E=|\|\uer\|_F^2-E_{\mathrm{C}}|/E_{\mathrm{C}}$, respectively.

Table~\ref{tab:certificates} reports numerical solver diagnostics across
all 700 PIE-Bench examples. The measured residuals meet the stated
tolerances, the multipliers remain within the theoretical bounds, and
no violations of the reliability-weighted energy inequality are observed.

\begin{table}[H]
\vspace{-1.2ex}
\caption{\textbf{Solver checks on PIE-Bench (700 examples, seed 42).} Residuals, multiplier bounds, direction preservation, and the mismatch inequality are computed on the realized \method field after dtype conversion.}
\label{tab:certificates}
\vspace{-0.5ex}
\centering
\small
\begin{tabular}{ll}
\toprule
Certificate & Measured (required)\\
\midrule
Active samples ($\Ec>\varepsilon_E$) & 700 of 700\\
Dual root $\eta^*$ (mean; min; max) & $-0.012$; $-0.155$; $-0.001$ (within $[-0.25,0]$)\\
Normalized dual residual $r_\phi$ (max) & $4.1\times10^{-7}$ ($\leq10^{-6}$)\\
Relative energy residual $r_E$ (max) & $5.3\times10^{-7}$\\
Normalized KKT residual (max) & $2.8\times10^{-8}$ ($\leq10^{-6}$)\\
Multiplier range (min; max) & $0.801$; $1.183$ (within $[0.8,4/3]$)\\
Minimum direction cosine & $0.99998$\\
Proxy mismatch before $\Mismatch(\uc;c)$ & 0.0520\\
Proxy mismatch after $\Mismatch(\uer;c)$ & 0.0473\\
Energy--reliability mismatch bound violations & 0\\
\bottomrule
\end{tabular}
\vspace{-2.0ex}
\end{table}

\vspace{-0.8ex}
\subsection{Experimental Setup and Reproducibility Protocol}
\label{app:reproducibility}
\vspace{-0.4ex}
All \method experiments strictly follow the ChordEdit~\citep{lu2026chordedit} protocol: one shared noise sample, $t=0.90$, $\delta=0.15$, update scale $\lambda=1.0$, and cleanup time $t_c=0.30$. For \method, we set $\mu=0.25$, $\kappa=1.0$, $\varepsilon_c=10^{-8}$, energy threshold $\varepsilon_E=10^{-12}$, and dual tolerance $10^{-6}$. These hyperparameters remain completely fixed across all PIE-Bench categories, TEdBench, and all three generative backbones (SD-Turbo, SwiftBrush-v2, InstaFlow) without any dataset-specific or backbone-specific tuning. Timing and memory benchmarks are measured on the same GPU and software environment with identical precision, batch size, warm-up, and synchronization; one-time model loading is excluded. Under this protocol, \method averages 0.37\,s/img (identical peak VRAM of 6988\,MiB to ChordEdit).

\vspace{-0.5ex}
\section{Additional Qualitative Results}
\label{sec:appendix_qualitative}

Figures~\ref{fig:appendix-additional-qualitative-1} and~\ref{fig:appendix-additional-qualitative-2} provide additional visual comparisons on PIE-Bench under matched random seeds across representative categories, supplementing Figure~\ref{fig:pie-qualitative}.
In these representative cases, some multi-step inversion methods (DDIM and
Direct Inversion with MasaCtrl or PnP) smooth non-target regions or alter
fine textures such as smoke plumes and mountain ridges. Some few-step
methods (TurboEdit, InfEdit, and InstantEdit) show under-editing or color
artifacts. The \method examples illustrate cleaner target changes with
better background preservation. The full pipeline uses one transport update
followed by cleanup and takes $0.37$\,s/img in the reported setup.

\clearpage
\begin{figure}[t]
\centering
\includegraphics[width=0.94\linewidth]{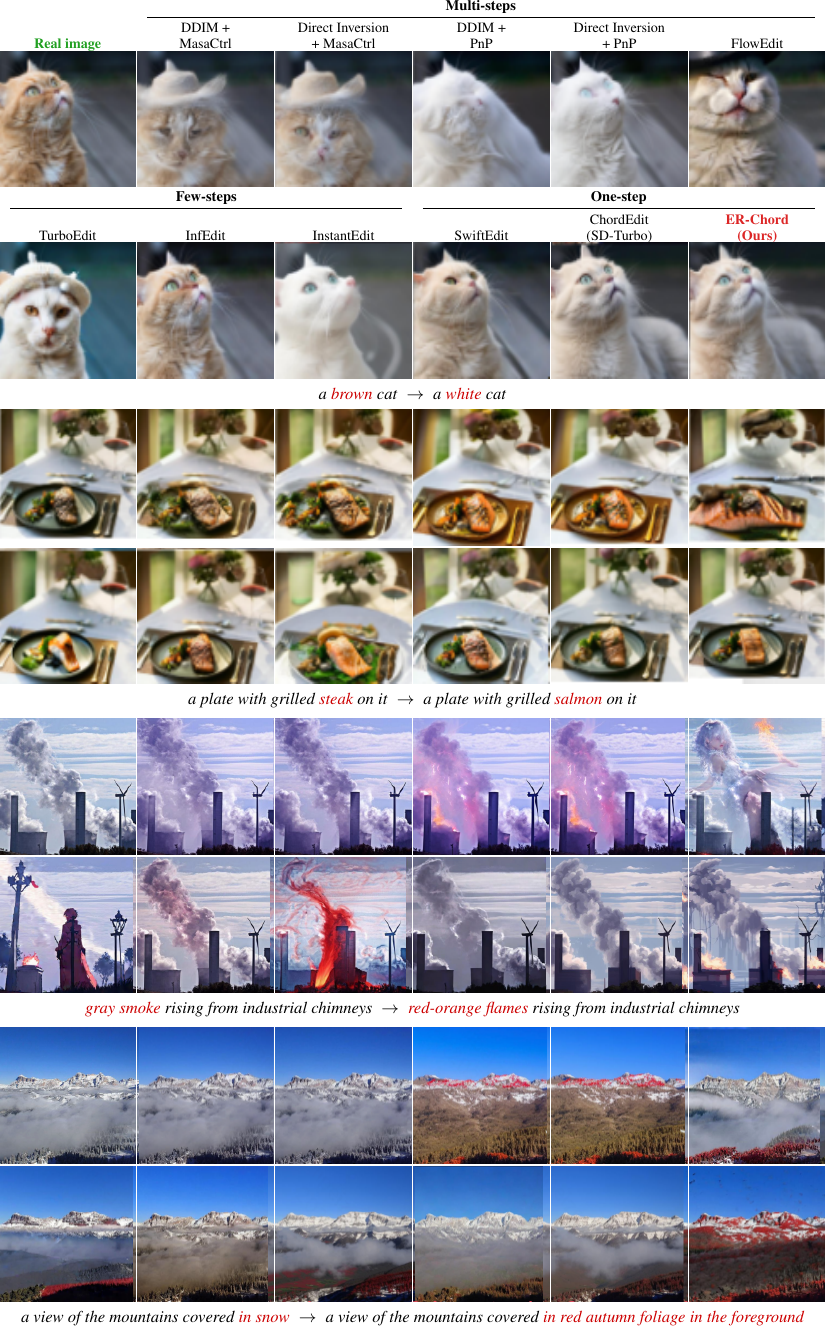}
\vspace{-1.0ex}
\caption{\textbf{Additional qualitative comparison on PIE-Bench (Part 1).} Extended visual comparisons across representative categories under matched random seeds, supplementing Figure~\ref{fig:pie-qualitative}.}
\label{fig:appendix-additional-qualitative-1}
\end{figure}

\clearpage
\begin{figure}[t]
\centering
\includegraphics[width=0.94\linewidth]{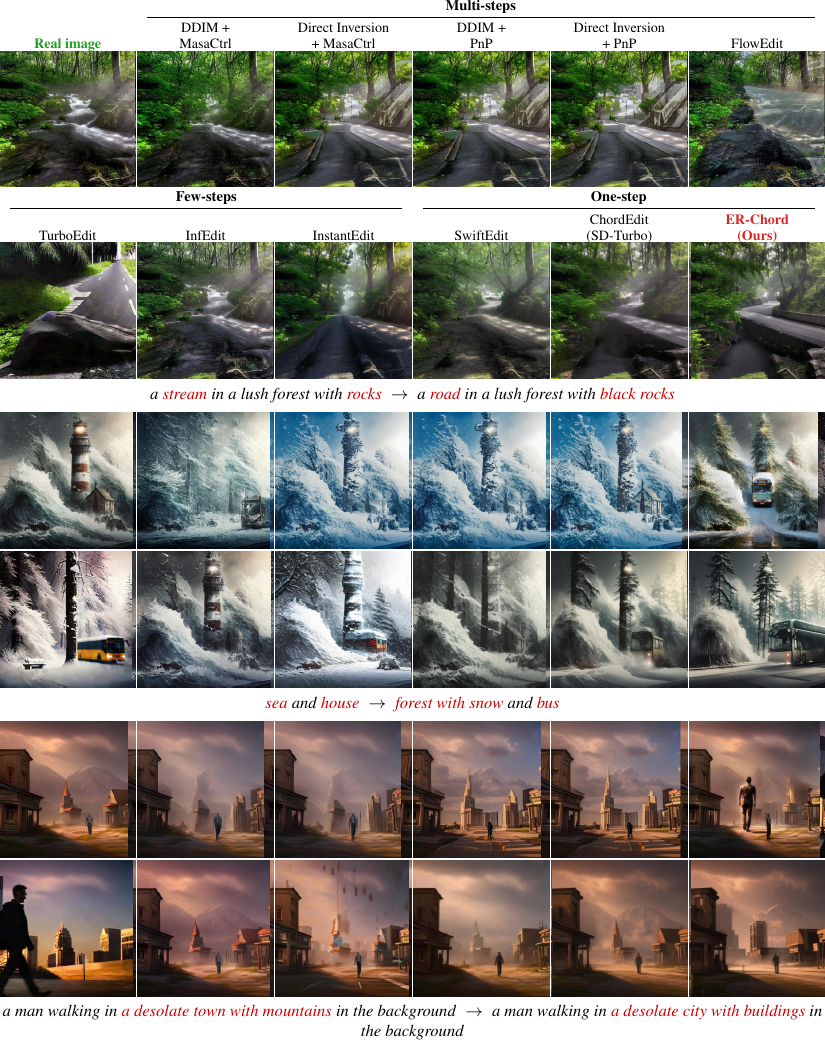}
\vspace{-1.0ex}
\caption{\textbf{Additional qualitative comparison on PIE-Bench (Part 2).} Extended visual comparisons across representative categories under matched random seeds, supplementing Figure~\ref{fig:pie-qualitative}.}
\label{fig:appendix-additional-qualitative-2}
\end{figure}

\clearpage

\section{Extended Quantitative Results and Notation}
\label{app:experiments}
\label{app:notation}
\vspace{-0.5ex}

Table~\ref{tab:pie-detail} reports extended quantitative results on all
700 PIE-Bench examples across the ten editing categories, supplementing
Table~\ref{tab:main}. Relative to ChordEdit, \method improves SSIM by
$0.96$ percentage points and reduces structure distance from
$16.58\times10^{-3}$ to $15.44\times10^{-3}$; CLIP-Whole and
CLIP-Edited also increase. Table~\ref{tab:symbols} summarizes the main
notation.

\begin{table}[!htbp]
\centering
\caption{\textbf{Additional PIE-Bench metrics (700 examples, seed 42).} ChordEdit values reproduce the extended metrics in \citet{lu2026chordedit}. MSE, LPIPS, and structure distance are scaled by $10^3$, and SSIM is reported as a percentage, following the source table. Bold denotes the better result.}
\label{tab:pie-detail}
\vspace{0.8ex}
\small
\begin{tabular}{lcc}
\toprule
Metric & \base & \method\\
\midrule
PSNR$\uparrow$ (dB) & 22.20 & \textbf{22.77}\\
MSE$\downarrow$ ($\times10^{-3}$) & 6.84 & \textbf{6.35}\\
SSIM$\uparrow$ ($\times10^{-2}$) & 75.91 & \textbf{76.87}\\
LPIPS$\downarrow$ ($\times10^{-3}$) & 128.25 & \textbf{120.23}\\
Structure dist.$\downarrow$ ($\times10^{-3}$) & 16.58 & \textbf{15.44}\\
CLIP-Whole$\uparrow$ & 25.58 & \textbf{25.74}\\
CLIP-Edited$\uparrow$ & 22.96 & \textbf{23.16}\\
\bottomrule
\end{tabular}

\vspace{4.0ex}

\caption{\textbf{Main notation used by \method.}}
\label{tab:symbols}
\vspace{0.8ex}
\small
\begin{tabularx}{\linewidth}{p{0.22\linewidth}X}
\toprule
Symbol & Meaning\\
\midrule
$\mathbf R_{t-\delta},\mathbf R_t$ & ChordEdit endpoint proxy fields at two nearby noise times.\\
$\uc$ & Original temporally smoothed Chord control field.\\
$p\in\Omega$ & Latent spatial position.\\
$s_p$ & Chord signal magnitude $\|\uc(p)\|_2$.\\
$d_p$ & Endpoint disagreement $\|\mathbf R_t(p)-\mathbf R_{t-\delta}(p)\|_2$.\\
$c_p$ & Endpoint-consistency score.\\
$\mu,\kappa$ & Reliability regularization strength and disagreement sensitivity.\\
$\Ec$ & Per-image Chord energy $\|\uc\|_F^2$.\\
$\eta^*$ & Dual variable enforcing equal energy.\\
$m_p$ & Local amplitude multiplier $(1+\mu(1-c_p)^2+\eta^*)^{-1}$.\\
$\uer$ & Equal-energy redistributed control field.\\
$\Mismatch(u;c)$ & Inconsistency-weighted fraction of control energy.\\
\bottomrule
\end{tabularx}
\end{table}

\end{document}